\newtheorem{theorem}{Theorem}
\newtheorem{assumption}{Assumption}
\title{Seeing through the Conflict: Transparent Knowledge Conflict Handling in Retrieval-Augmented Generation}
\author {
    Hua Ye\equalcontrib\textsuperscript{\rm 1},
    Siyuan Chen\equalcontrib\textsuperscript{\rm 2},
    Ziqi Zhong\textsuperscript{\rm 3},
    Canran Xiao\textsuperscript{\rm 4}\thanks{Corresponding author.},
    Haoliang Zhang\textsuperscript{\rm 5},
    Yuhan Wu\textsuperscript{\rm 6},
    Fei Shen\textsuperscript{\rm 7}
}
\begin{document}

\maketitle

\begin{abstract}
Large language models (LLMs) equipped with retrieval---the
Retrieval-Augmented Generation (RAG) paradigm---should combine
their parametric knowledge with external evidence, yet in practice they
often hallucinate, over-trust noisy snippets, or ignore vital context.
We introduce \textsc{TCR} (\emph{Transparent Conflict Resolution}), a
plug-and-play framework that makes this decision process observable and
controllable.  TCR \textbf{(i)} disentangles \textit{semantic match} and
\textit{factual consistency} via dual contrastive encoders,  
\textbf{(ii)} estimates \textit{self-answerability} to gauge confidence
in internal memory, and  
\textbf{(iii)} feeds the three scalar signals to the generator through a
lightweight soft-prompt with SNR-based weighting.  
Across seven benchmarks TCR improves conflict detection
(+5--18\,F\textsubscript{1}), raises knowledge-gap recovery by
$\mathbf{+21.4}$\,pp and cuts misleading-context overrides by
$\mathbf{-29.3}$\,pp, while adding only 0.3 \% parameters.  
The signals align with human judgements and expose temporal decision patterns.
\end{abstract}


\section{Introduction}

Recent advancements in large language models (LLMs) have significantly improved the performance and applicability of natural language processing (NLP) systems across various tasks, including question answering, text summarization, and conversational AI~\citep{diao-etal-2024-learning,Diao_2025_WACV,zhang2025enhancing,wang2025medical}. Despite their remarkable generative capabilities, LLMs often struggle with accurately recalling specific factual information~\citep{li2025modeling,huang2024gaussianmarker}, leading to frequent hallucinations or inconsistent responses, particularly in knowledge-intensive domains~\citep{yao2023ndc,tong2025does,xiao2025diffusion}.

\begin{figure}[H]
	\centering
	\includegraphics[width=1\linewidth]{./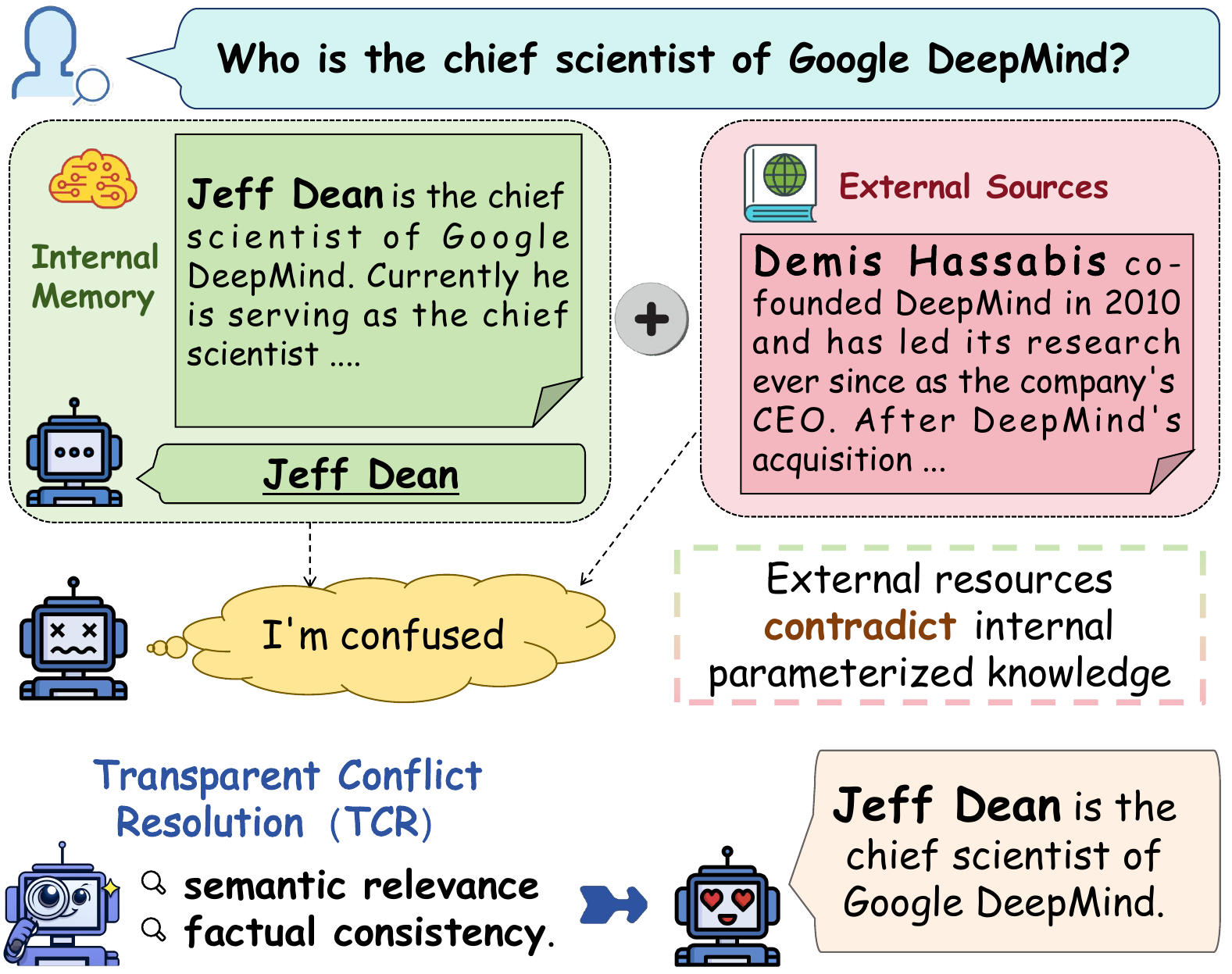}
	\caption{\textbf{Knowledge conflicts in a typical RAG system}. Our method disentangles semantic relevance from factual consistency, detects conflicts, and injects lightweight prompt signals to steer RAG models toward faithful, knowledge-aligned generation.}
	\label{fig:teaser}
\end{figure}

Retrieval-Augmented Generation (RAG) enhances LLMs by combining internal parametric knowledge with external non-parametric knowledge retrieved at inference~\citep{lewis2020retrieval}. Explicitly integrating external knowledge significantly improves factual accuracy and contextual relevance for tasks such as question answering and fact verification~\citep{chen-etal-2022-rich, yu-etal-2023-characterizing}. However, conflicts between implicitly stored internal knowledge and retrieved external information remain a critical issue, often causing inconsistent or incorrect outputs~\citep{longpre-etal-2021-entity, xie2024adaptive,tao2023dudb}.

Previous studies highlight variability in how LLMs handle knowledge conflicts, with models either prioritizing internal parametric knowledge~\citep{longpre-etal-2021-entity} or external retrieved information~\citep{chen-etal-2022-rich}. Recent research reveals models internally encode signals indicating knowledge discrepancies and answerability, yet fail to utilize them effectively due to sequential forward propagation constraints. Current resolution methods either explicitly modify outputs, demanding extensive dataset-specific tuning and limiting generalization~\citep{shi2025ircan, jin-etal-2024-tug,wang2024computing,jiang2025transforming}, or rely on black-box APIs and manual interventions, reducing transparency and autonomy~\citep{wang2024resolving}. Fig.~\ref{fig:teaser} illustrates a typical knowledge conflict scenario in a RAG system, where internal parametric knowledge and external retrieved information clash, resulting in uncertainty.

To address these limitations, we propose a novel framework that integrates transparent conflict detection and autonomous resolution into RAG. Our method:  
1) Detects knowledge conflicts between internal model knowledge and external retrieved information,  
2) Extracts meaningful signals from model representations to guide generation toward accurate outputs,  
3) Ensures interpretability and compatibility with existing RAG architectures without manual intervention, improving scalability and robustness.   Our key contributions are summarized as follows:

1)~A transparent semantic vector-based conflict detection method, enhancing interpretability over black-box approaches~\citep{wang2024resolving}.  

2)~An autonomous conflict-resolution mechanism using soft prompt tuning, enabling accurate, consistent generation without manual intervention. Plug-and-play for easy integration.  

3)~Extensive experiments show our method reduces factual errors, advancing RAG reliability in knowledge-intensive tasks.  


\section{Related Work}
\paragraph{Retrieval-Augmented Generation.}

Large language models (LLMs) effectively encode factual knowledge parametrically, yet suffer from hallucinations~\citep{chen2024benchmarking}, rare entity retention issues~\citep{kandpal2023large}, and temporal degradation~\citep{jang-etal-2022-temporalwiki}. Retrieval-Augmented Generation (RAG) mitigates these by combining parametric knowledge with external retrieval~\citep{lewis2020retrieval}. Early approaches pre-trained smaller retrieval-augmented models~\citep{guu2020retrieval}, while recent work exploits large models' in-context learning~\citep{yu2023generate, ram-etal-2023-context}. However, resolving conflicts between retrieved and parametric knowledge remains challenging~\citep{min2023augmented, ni-etal-2024-llms,zhong_2025}.

\paragraph{Context-Memory Conflicts in LLMs.}
Context-memory conflict occurs when retrieved context contradicts internal knowledge~\citep{longpre-etal-2021-entity, chen-etal-2022-rich,diao-etal-2025-temporal}, often due to temporal mismatches, misinformation in documents~\citep{pan-etal-2023-attacking}, or misleading prompts~\citep{xu-etal-2024-earth}. Models exhibit inconsistent conflict resolution, sometimes favoring parametric knowledge~\citep{longpre-etal-2021-entity} and other times prioritizing coherent context~\citep{chen-etal-2022-rich, su2024conflictbank}. Additional challenges include confirmation bias~\citep{xie2024adaptive} and susceptibility to deceptive inputs~\citep{ying-etal-2024-intuitive}. Recent studies emphasize models' limited robustness in such scenarios, highlighting the need for better solutions~\citep{su2024conflictbank,xiao2024confusion}.

\paragraph{Resolving Knowledge Conflicts.} 
Existing conflict-resolution strategies fall into five categories: (1) \textit{Context-faithful} methods prioritize external knowledge via fine-tuning~\citep{li-etal-2023-large, gekhman-etal-2023-trueteacher, xue-etal-2023-improving}, prompting~\citep{zhou-etal-2023-context}, decoding techniques, or knowledge plugins; (2) \textit{Memory-faithful} approaches counter misinformation via vigilance mechanisms~\citep{xu-etal-2024-earth}, with some advocating parametric knowledge in noisy contexts~\citep{zhang2024raft}; (3) \textit{Source-distinguishing} methods present multiple source-derived answers for user selection~\citep{neeman-etal-2023-disentqa}; (4) \textit{Factuality-enhancing} techniques improve correctness via hybrid knowledge fusion~\citep{zhang-etal-2023-merging} or contrastive decoding; and (5) \textit{Structural modifications} adjust architectures to optimize knowledge integration~\citep{shi2025ircan, jin-etal-2024-cutting}. 

Current methods remain biased toward either context or memory, lacking adaptive balancing mechanisms~\citep{wang2024resolving}. Our work bridges this gap with an interpretable, dynamic conflict-resolution framework.

\section{Exploratory Analysis}
\label{sec:exploratory}
We first investigate whether sentence embeddings disentangle different information dimensions (e.g., meaning vs. truthfulness) and assess if off-the-shelf encoders inherently capture signals for conflict detection. We address:  
\textbf{(Q1)} Can standard encoder embedding spaces separate \emph{what a sentence is about} (meaning) from \emph{whether its fact is true} (truthfulness)?  
\textbf{(Q2)} If such signals exist, how reliably do they expose knowledge conflicts in RAG systems?  

\vspace{1mm}
\noindent\textbf{Experimental Setup.} We sample $5\,\text{k}$ factual triples from Wikidata
and automatically create, for each fact, three surface forms:
(i)~\textbf{Paraphrase} (same fact, different wording),  
(ii)~\textbf{Contradiction} (negated relation), and  
(iii)~\textbf{Unrelated} (different subject).  
All sentences are embedded using two public models of contrasting
capacity: \texttt{all-MiniLM-L6-v2} (384-d, 6 layers) and
\texttt{E5-large-v2} (1024-d, 24 layers).
Fig.~\ref{fig:semantic-factual} visualises the raw embeddings after
projection onto their first three principal components.

\begin{figure}[ht]
  \centering
  \includegraphics[width=0.45\textwidth]{./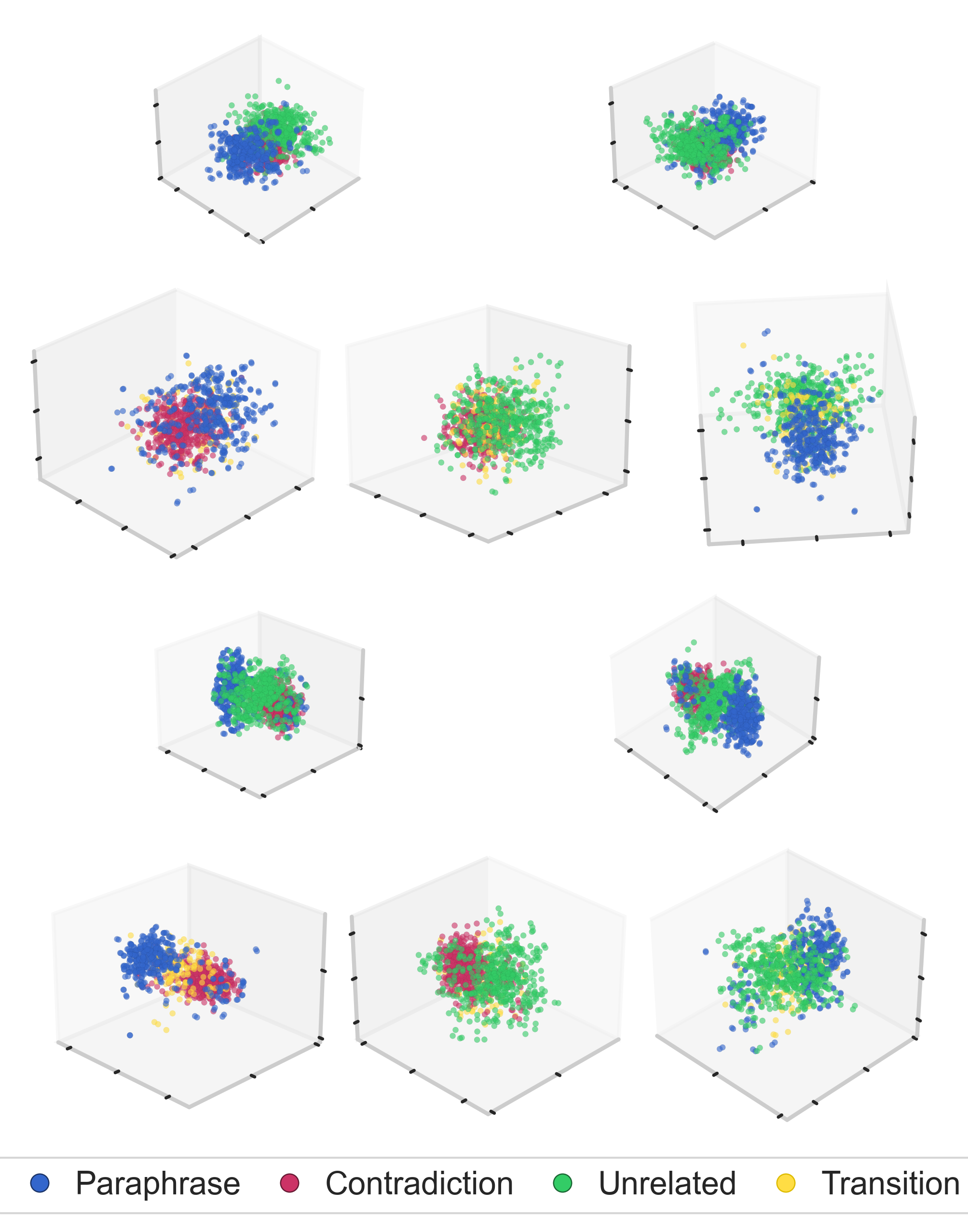}
  \caption{\textbf{Sentence‐embedding clusters for \textsc{all-MiniLM-L6-v2} (top)
vs.\ \textsc{E5-large-v2} (bottom)}.  
Rows 1 \& 3: two PCA views of 3-D space—paraphrase (blue), contradiction
(red), unrelated (green).  
Rows 2 \& 4: boundary regions (yellow) highlight transition zones.
MiniLM (384 d) forms a single blob; the larger E5 (1024 d, 24 layers) begins to separate truth from meaning.}
\label{fig:semantic-factual}
\end{figure}

\vspace{1mm}
\noindent\textbf{Global Embedding Structure.} In Fig.~\ref{fig:semantic-factual} we observe that the lighter
MiniLM encoder yields one dominant cluster where paraphrases,
contradictions, and unrelated sentences overlap.  
The larger E5 encoder begins to form two visible sub-manifolds, yet the
transition regions (yellow points) remain fuzzy—suggesting that
\emph{model scale alone does not guarantee explicit separation of
meaning and truth}.  This lack of structure motivates a disciplined
decoupling strategy.

\begin{figure}[t]
  \centering
  \includegraphics[width=0.47\textwidth]{./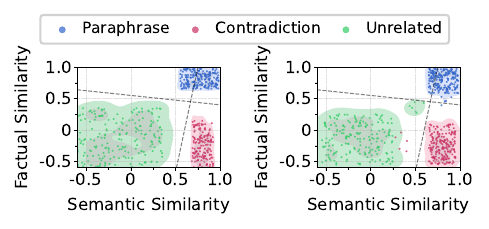}
  \caption{\textbf{2D scatter plot of semantic-factual similarity space}. Left: Sentence pairs show clear separation of paraphrases (top right), contradictions (bottom right), and unrelated sentences (bottom left) using just one linear projection layer. Contour lines show KDE boundaries. Right: Example of annotated transitions (other types perform similarly well). 
  }
\label{fig:semantic-factual-space}
\end{figure}

\vspace{1mm}
\noindent\textbf{Identifying a Semantic–Truth Subspace.} 
We next train a single linear probe to project each embedding onto a
2-D plane that maximises class separability while keeping
paraphrase pairs close.  The resulting \emph{semantic–truth space} is
shown in Fig.~\ref{fig:semantic-factual-space}.  A distinctive
triangular pattern appears:
paraphrases cluster in the upper-right (high meaning, high truth),
contradictions in the lower-right (high meaning, low truth), and
unrelated sentences in the lower-left (low meaning, low truth).
Kernel-density contours highlight crisp boundaries, and manual
inspection of boundary points (right panel) confirms smooth, interpretable
gradients rather than abrupt jumps.

\vspace{1mm}
\noindent\textbf{Findings}.
Our probe reveals that off-the-shelf embeddings intertwine meaning and truth, yet a single linear projection cleanly separates them and yields smooth, continuous scores. This motivates (i)~dedicated semantic and factual encoders, (ii)~independent contrastive losses to sharpen each axis, and (iii)~soft scalar conflict signals fed into generation—choices.

\section{Proposed Method}

\begin{figure*}[htb]
	\centering
	\includegraphics[width=0.9\linewidth]{./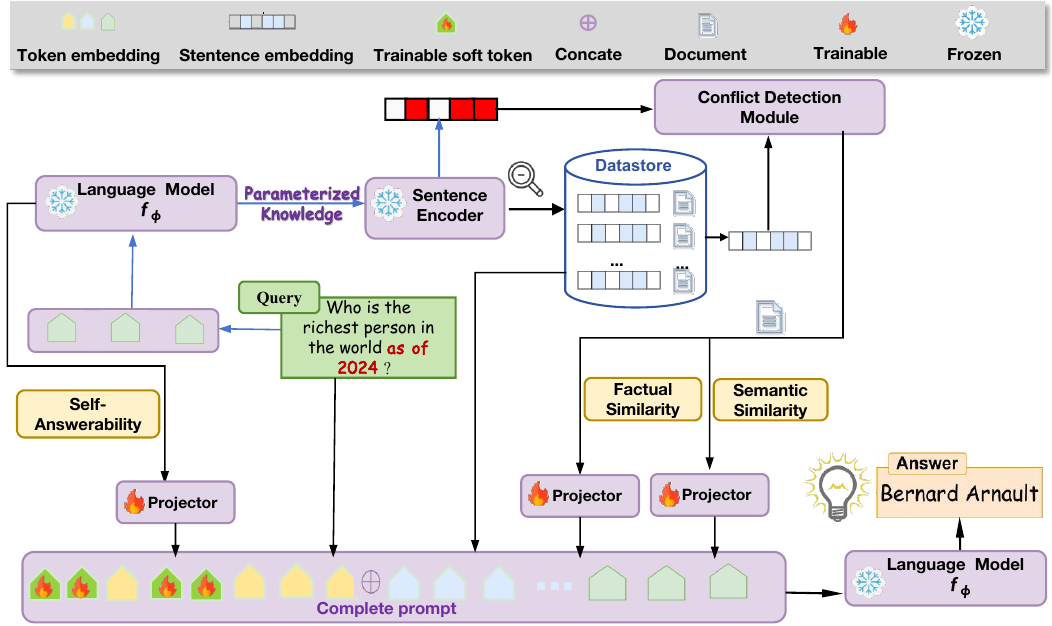}
	\caption{\textbf{Overview of our \underline{T}ransparent \underline{C}onflict \underline{R}esolution (TCR) framework for Retrieval-Augmented Generation (RAG)}. Given a query, TCR identifies knowledge conflicts through semantic and factual similarity vectors, generating explicit conflict signals. These signals, along with self-answerability estimation, are integrated using soft prompt tuning to guide the Language Model's generation toward enhanced factual consistency and interpretability.}
    \vspace{-5mm}
	\label{fig:pipeline}
\end{figure*}

We propose a novel framework to resolve knowledge conflicts in RAG. Given query \(q\), RAG generates response \(y\) conditioned on external context \(c\) and internal parametric knowledge \(\theta\):  
$
    p(y|q,c;\theta).  
$ 
Conflicts arise when \(c\) contradicts \(\theta\), leading to inconsistent outputs. Our method detects such conflicts and guides generation via conflict-aware signals. The framework includes: (1) a contrastive learning-based conflict detection module to decouple semantics from factual accuracy, and (2) a conflict-aware generation module using soft prompt tuning. The pipeline is shown in Fig.~\ref{fig:pipeline}.  

\subsection{Conflict Detection via Contrastive Learning}

\paragraph{Dataset Construction.} 
We construct a conflict-focused dataset \(\mathcal{D}\), derived from knowledge triples in Wikidata and expanded via GPT-4o. Each original statement \(s\) is associated with 3 types of statements: paraphrases (\(s^{+}_{para}\)) preserving semantic and factual equivalence, irrelevant statements (\(s^{-}_{irr}\)) with no semantic relation, and conflicting statements (\(s^{-}_{conf}\)) that maintain semantic coherence but introduce factual contradictions. This structured diversity simulates scenarios of knowledge conflicts faced by RAG systems.

\paragraph{Semantic-Factual Feature Decoupling.}
Effective conflict detection requires separating semantic similarity from factual correctness. We employ two encoders—semantic encoder \(E_{sem}\) and factual encoder \(E_{fact}\)—based on the SFR retrieval model. Each textual statement \(s\) is encoded into two distinct vector spaces as follows:
\begin{equation}
    \mathbf{z}_{sem} = E_{sem}(s), \quad \mathbf{z}_{fact} = E_{fact}(s),
\end{equation}
where semantic vectors \(\mathbf{z}_{sem}\) capture meaning and topical coherence, and factual vectors \(\mathbf{z}_{fact}\) encode factual validity.

\paragraph{Contrastive Learning Objective.}
We train the semantic and factual encoders using separate contrastive losses. Specifically, for each original statement \( s \):

The semantic contrastive loss (\(\mathcal{L}_{sem}\)) is defined as:
\begin{equation}
\begin{split}
&\mathcal{L}_{sem}(s) = \\
-&\log \frac{\sum_{s' \in \{s^{+}_{para}, s^{-}_{conf}\}} 
e^{\text{sim}(\mathbf{z}_{sem}(s),\mathbf{z}_{sem}(s'))/\tau}}{
\sum_{s' \in \{s^{+}_{para}, s^{-}_{conf}, s^{-}_{irr}\}} 
e^{\text{sim}(\mathbf{z}_{sem}(s),\mathbf{z}_{sem}(s'))/\tau}}
\end{split}
\end{equation}

The factual contrastive loss (\(\mathcal{L}_{fact}\)) is similarly defined as:
\begin{equation}
\begin{split}
&\mathcal{L}_{fact}(s) = \\
-&\log \frac{\sum_{s' \in \{s^{+}_{para}\}} 
e^{\text{sim}(\mathbf{z}_{fact}(s),\mathbf{z}_{fact}(s'))/\tau}}{
\sum_{s' \in \{s^{+}_{para}, s^{-}_{conf}, s^{-}_{irr}\}} 
e^{\text{sim}(\mathbf{z}_{fact}(s),\mathbf{z}_{fact}(s'))/\tau}}
\end{split}
\end{equation}

The overall contrastive training objective combines these two losses over all samples \( s \) in dataset \(\mathcal{D}\):
\begin{equation}
\mathcal{L}_{ctr} = \sum_{s \in \mathcal{D}}\left(\mathcal{L}_{sem}(s) + \mathcal{L}_{fact}(s)\right).
\end{equation}

\subsection{Conflict-Aware Generation via Soft Prompt Tuning}

\paragraph{Extraction and Integration of Conflict Signals.}
The conflict detection module explicitly provides scalar signals representing semantic similarity (\(\sigma_{sem}\)), factual consistency (\(\sigma_{fact}\)), and model-inferred answerability (\(\sigma_{ans}\)). These signals are formally extracted as:
\begin{align}
\sigma_{sem}(q, c) &= \text{sim}(\mathbf{z}_{sem}(q),\mathbf{z}_{sem}(c)), \\
\sigma_{fact}(q, c) &= \text{sim}(\mathbf{z}_{fact}(q),\mathbf{z}_{fact}(c)), \\
\sigma_{ans}(q) &= A(q;\theta),
\end{align}
where \(A(\cdot)\) is the model’s answerability estimation following \citet{slobodkin-etal-2023-curious}. These signals are then projected into embeddings via dedicated MLP projectors:
\begin{equation}
    \mathbf{e}_{signal} = \text{MLP}\left([\sigma_{sem},\sigma_{fact},\sigma_{ans}]\right).
\end{equation}

\paragraph{Soft Token Embedding Strategy.}
To guide generation with conflict-aware cues, we prepend \emph{soft tokens}—trainable embeddings seeded from the base model—with the conflict signals, yielding an augmented embedding sequence that directs the model’s attention
$
\mathbf{x}' = [\mathbf{e}_{soft},\mathbf{e}_{signal},\mathbf{x}],
$
where \(\mathbf{x}\) represents the original input embeddings. This approach allows the model to leverage conflict-awareness explicitly during response generation, improving factual consistency.

\paragraph{Dynamic Loss Weighting via Signal-to-Noise Ratio.}
To optimize the integration of multiple conflict signals, we dynamically weight their training contributions according to the Signal-to-Noise Ratio (SNR). The SNR for each signal \(i \in \{\text{sem}, \text{fact}, \text{ans}\}\) is computed as:
\begin{equation}
    \text{SNR}_i = \frac{\text{Var}(\hat{y}_i)}{\text{Var}(y-\hat{y}_i)},
\end{equation}
where \(\hat{y}_i\) is the prediction made solely using signal \(i\), and \(y\) is the ground-truth label indicating presence or absence of a conflict. Signals with higher SNR values contribute more significantly, weighted by:
\begin{equation}
w_i = \frac{\exp(\text{SNR}_i)}{\sum_j \exp(\text{SNR}_j)}.
\end{equation}

The combined loss for the generation module is:
\begin{equation}
\begin{split}
\mathcal{L}_{total} = \sum_{i \in \{\text{sem},\text{fact},\text{ans}\}} & w_i [\alpha \mathcal{L}_{prompt,i} \\
& + (1-\alpha)\mathcal{L}_{projector,i}],
\end{split}
\end{equation}
where \(\alpha\) balances the relative contributions of soft prompt and signal-projection losses.

By employing conflict-aware signals with dynamic weighting, our method effectively enhances the reliability and factual consistency of RAG-generated responses.

\begin{assumption}\label{assum:error_rates}
Let $\rho\in[0,1]$ denote the probability of noisy retrieval, $\alpha = \Pr[D=0 \mid R=1]$ the false-negative rate (FNR), $\gamma = \Pr[D=1 \mid R=0]$ the false-positive rate (FPR), $\varepsilon = \Pr[G=1 \mid R=0, D=0]$ the baseline error rate, $\beta = \Pr[G=1 \mid D=1]$ the error rate with conflict-aware prompt, and $\zeta = \Pr[G=1 \mid R=1, D=0]$ the undetected noise error rate~\citep{tong2025rainbow}. All error rates $(\varepsilon,\beta,\zeta,\alpha,\gamma)$ are query-independent (i.i.d. setting).
\end{assumption}

\begin{theorem}[Noise-Robustness Error Bound]\label{thm:noise}
For any noisy-retrieval pipeline with conflict detection, the increase in expected EM loss $\Delta \coloneqq \Pr[G=1] - \varepsilon$ satisfies:
\begin{align}
\Delta &= (1-\rho)\gamma(\beta-\varepsilon) + \rho\left[(1-\alpha)\beta + \alpha\zeta - \varepsilon\right] \label{eq:exact-gap} \\
&\leq \rho\alpha + \beta\left[\rho(1-\alpha) + (1-\rho)\gamma\right], \label{eq:tight-bound}
\end{align}
where the inequality is \emph{tight} (equality when $\zeta=1$, $\varepsilon=0$).
\end{theorem}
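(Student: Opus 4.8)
The plan is to evaluate the expected EM loss $\Pr[G=1]$ in closed form by the law of total probability, subtract $\varepsilon$, and then relax. First I would split on the detector output: since $\beta = \Pr[G=1\mid D=1]$ by definition, $\Pr[G=1] = \Pr[D=1]\,\beta + \Pr[G=1,\,D=0]$. By Assumption~\ref{assum:error_rates} the marginal detection rate is $\Pr[D=1] = (1-\rho)\gamma + \rho(1-\alpha)$, mixing the false-positive rate on clean retrieval ($R=0$) with the true-positive rate on noisy retrieval ($R=1$). For the joint term $\Pr[G=1,\,D=0]$ I would condition further on $R$, which is exactly where the two remaining conditional error rates enter: the clean-undetected branch contributes $(1-\rho)(1-\gamma)\varepsilon$ and the noisy-undetected branch contributes $\rho\alpha\zeta$. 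Assembling these yields
\[
\Pr[G=1] = \big[(1-\rho)\gamma + \rho(1-\alpha)\big]\beta + (1-\rho)(1-\gamma)\varepsilon + \rho\alpha\zeta .
\]

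The exact identity \eqref{eq:exact-gap} then follows by subtracting $\varepsilon$ and collecting its coefficient: one checks $(1-\rho)(1-\gamma) - 1 = -(1-\rho)\gamma - \rho$, so the $\varepsilon$-terms regroup into $-(1-\rho)\gamma\varepsilon - \rho\varepsilon$, which is precisely what is needed to fold them into the two bracketed expressions $(1-\rho)\gamma(\beta-\varepsilon)$ and $\rho[(1-\alpha)\beta + \alpha\zeta - \varepsilon]$. This step is pure bookkeeping once the decomposition is in place.

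For the inequality \eqref{eq:tight-bound} I would start from \eqref{eq:exact-gap} and apply two elementary relaxations. Because $\varepsilon \ge 0$ we have $\beta - \varepsilon \le \beta$ and $(1-\alpha)\beta + \alpha\zeta - \varepsilon \le (1-\alpha)\beta + \alpha\zeta$, and because $\zeta \le 1$ we have $\alpha\zeta \le \alpha$; substituting gives $\Delta \le (1-\rho)\gamma\beta + \rho(1-\alpha)\beta + \rho\alpha = \rho\alpha + \beta[\rho(1-\alpha) + (1-\rho)\gamma]$. Each relaxation is an equality exactly when $\varepsilon = 0$ and $\zeta = 1$ respectively (with $\gamma,\rho,\alpha$ nonzero so the discarded terms are genuinely present), giving the stated tightness.

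There is no deep obstacle; the care is organizational. One must condition in the right order so that only the five rates of Assumption~\ref{assum:error_rates} appear --- in particular recognizing that $\beta$ may be used as a marginal over the event $\{D=1\}$, so that no conditional independence of $G$ and $R$ given $D$ is required, whereas $\varepsilon$ and $\zeta$ are genuinely conditioned on the joint event $(R,D)$. The remaining mild point, well-posedness of the conditional probabilities, is covered by the i.i.d./query-independent stipulation in the assumption, and the regrouping toward the exact form \eqref{eq:exact-gap} should be carried out carefully to avoid a sign slip in the coefficient of $\varepsilon$.
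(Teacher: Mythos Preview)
Your argument is correct and follows essentially the same route as the paper: a total-probability decomposition of $\Pr[G=1]$ over $(R,D)$, subtraction of $\varepsilon$ to obtain \eqref{eq:exact-gap}, and then the relaxations $\varepsilon\ge 0$, $\zeta\le 1$ to reach \eqref{eq:tight-bound} with tightness at $\zeta=1$, $\varepsilon=0$. The only cosmetic difference is that you condition first on $D$ and then on $R$ whereas the paper conditions first on $R$; your relaxation also makes explicit that $\beta\ge\varepsilon$ is not actually needed for the upper bound, which is a clean observation.
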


\begin{proof}[Proof sketch]
The result follows by: (1) partitioning $\Pr[G=1]$ via total probability as $(1-\rho)\big[(1-\gamma)\varepsilon + \gamma\beta\big] + \rho\big[(1-\alpha)\beta + \alpha\zeta\big]$, then subtracting $\varepsilon$ to obtain the exact gap; (2) upper-bounding using $\beta\geq\varepsilon$ and $\zeta\leq1$, dropping the negative term $-(1-\rho)\gamma\varepsilon$ while noting $\alpha\zeta-\alpha\varepsilon\leq\alpha$; (3) observing tightness when $\zeta=1$ and $\varepsilon=0$ makes all bounding steps exact.
\end{proof}
 
\paragraph{Interpretation.}
Eq.~\eqref{eq:tight-bound} bounds the excess error by missed conflicts ($\rho\alpha$) plus conflict-branch errors weighted by $\rho(1-\alpha)+(1-\rho)\gamma$. When $\gamma\!\ll\!1$ and $\beta\!\approx\!\varepsilon$, $\rho\alpha$ dominates, so low FNR is crucial.

\section{Experiment}
Our empirical study is driven by five questions:  
\noindent\textbf{RQ1} – \emph{Conflict Detection}: does semantic–factual disentangling improve conflict classification ?  
\noindent\textbf{RQ2} – \emph{End-to-End Factuality}: do conflict-aware soft prompts raise answer correctness and faithfulness ?
\noindent\textbf{RQ3} – \emph{Robustness \& Transfer}: how does the method behave under noisy retrieval, domain shift, and with different LLM backbones?  
\noindent\textbf{RQ4} – \emph{Efficiency}: what is the latency, memory, and parameter overhead ?  
\noindent\textbf{RQ5} – \emph{Interpretability}: are conflict scores aligned with human judgement and can we show intuitive examples?

\subsection{Experimental Setup}
\noindent\textbf{Benchmarks.}
We evaluate on three groups of datasets.  
(i)~\emph{Conflict detection}: our
Wikidata-Conflict-5K.  
(ii)~\emph{Knowledge-intensive QA with controlled context}: the two
datasets we build ConflictTQA (TriviaQA-based) and
ConflictPQA (PopQA-based)—each question paired with
golden, irrelevant or conflicting context.  
(iii)~\emph{Real-world RAG}: \textsc{KILT} (NQ, HotpotQA, FEVER),
\textsc{ConflictBank 2024}, and a single-document \textsc{NQ}
setting that keeps only the top-1 Google result.

\vspace{1mm}
\noindent\textbf{Models.}
All systems share the same BM25 $\rightarrow$ Contriever-1024 retrieval
pipeline and \texttt{Llama-3-Instruct-8B} ~\cite{grattafiori2024llama} backbone. We also report results on \texttt{Llama-3-13B} and
\texttt{Qwen3-8B}. TCR tunes only 20 soft-prompt tokens and two small MLP projectors.

\vspace{1mm}
\noindent\textbf{Baselines.}
We compare against six published methods—
\textsc{Prompt}, \textsc{KAFT}~\cite{li-etal-2023-large}, \textsc{IRCAN}~\cite{shi2025ircan}, \textsc{RAAT}~\cite{fang-etal-2024-enhancing},
\textsc{Parenting}~\cite{xu2024parenting}, \textsc{Astute RAG}~\cite{wang2024astute}—plus the decoding strategy
\textsc{CD}$^{2}$~\cite{jin-etal-2024-tug}. For real-world NQ we add
\textsc{InstructRAG}~\cite{wei2025instructrag} and \textsc{Self-Route}~\cite{xu2024retrieval}.
Four ablations of TCR (--semantic, --factual, --SNR, hard-prompt) and three
hybrid variants (TCR+\textsc{CD}$^{2}$, TCR+\textsc{IRCAN}, TCR+\textsc{RAAT}) test
plug-and-play compatibility. 

\vspace{1mm}
\noindent\textbf{Metrics.}
We report \textbf{EM}/\textbf{F1} for answer correctness,
\textbf{MCOR} (lower is better) and \textbf{KGRR} (higher is better) for
conflict sensitivity, F1/AUROC for detection, FactScore/QAGS-F/GPT-judge
for faithfulness, $\Delta$EM under noise/time-shift for robustness, throughput
(V-tokens/s) \& extra parameters/VRAM/FLOPs for efficiency~\citep{yao2024swift}, and
$\rho$\,/\,Krippendorff $\alpha$ for interpretability.

\section{Results and Discussion}
\label{sec:results}

\noindent\textbf{RQ1 – Conflict Detection Accuracy}
\label{sec:rq1}
Table \ref{tab:detect_robust} shows that disentangling semantics and factuality
yields the highest F1 and AUROC.  Adding
contrastive decoding gives a small AUROC bump without harming F1,
confirming orthogonality between TCR’s conflict score and decoding
heuristics. 

\begin{table}[t]
\centering
\small
\setlength{\tabcolsep}{3pt}
\resizebox{\linewidth}{!}{%
\begin{tabular}{lcccc}
\toprule
\multirow{2}{*}{\textbf{Method}} &
\multicolumn{2}{c}{\textbf{Conflict Detection}} &
\multicolumn{2}{c}{\textbf{Robustness \& Transfer}} \\
\cmidrule(lr){2-3}\cmidrule(lr){4-5}
& \textbf{F1}$\uparrow$ & \textbf{AUROC}$\uparrow$
& \begin{tabular}[c]{@{}c@{}}\textbf{EM Drop}$\downarrow$\\(30\% noise)\end{tabular}
& \begin{tabular}[c]{@{}c@{}}\textbf{Cross-Domain F1}$\uparrow$\\Bio / Fin\end{tabular} \\
\midrule
Prompt            & 71.2 & 0.779 & 18.4 & 46.7 / 44.1 \\
KAFT              & 73.5 & 0.801 & 16.7 & 48.5 / 45.3 \\
IRCAN             & 79.1 & 0.845 & 12.3 & 52.1 / 48.9 \\
RAAT              & 66.4 & 0.732 &  9.5 & 51.2 / 48.4 \\
Parenting         & 77.5 & 0.833 & 14.1 & 50.9 / 46.8 \\
\midrule
\textsc{TCR}       & \textbf{84.3} & \underline{0.901} & \underline{7.2}$^{\dagger}$ & \underline{55.8 / 52.7}$^{\dagger}$ \\
TCR+\textsc{CD}$^{2}$ & \underline{83.8} & \textbf{0.905} & -- & -- \\
TCR+RAAT          & -- & -- & \textbf{6.1}$^{\dagger}$ & \textbf{56.3 / 53.1}$^{\dagger}$ \\
\bottomrule
\end{tabular}%
}
\caption{\textbf{Unified evaluation of conflict detection and robustness.}
Left: conflict-detection performance on \textsc{Wikidata-Conflict-5K}.
Right: robustness under 30\% distractor injection (EM drop; lower is better) and
zero-shot transfer F1 on biomedical/financial QA. Backbone: Llama-3-8B.
\textbf{Bold}=best; \underline{underline}=second best (per column).
$^{\dagger}$ indicates statistically significant improvement over Prompt.
``--'' denotes not evaluated in that setting.}
\label{tab:detect_robust}
\end{table}

\begin{figure}[ht]
  \centering
  \includegraphics[width=.72\linewidth]{./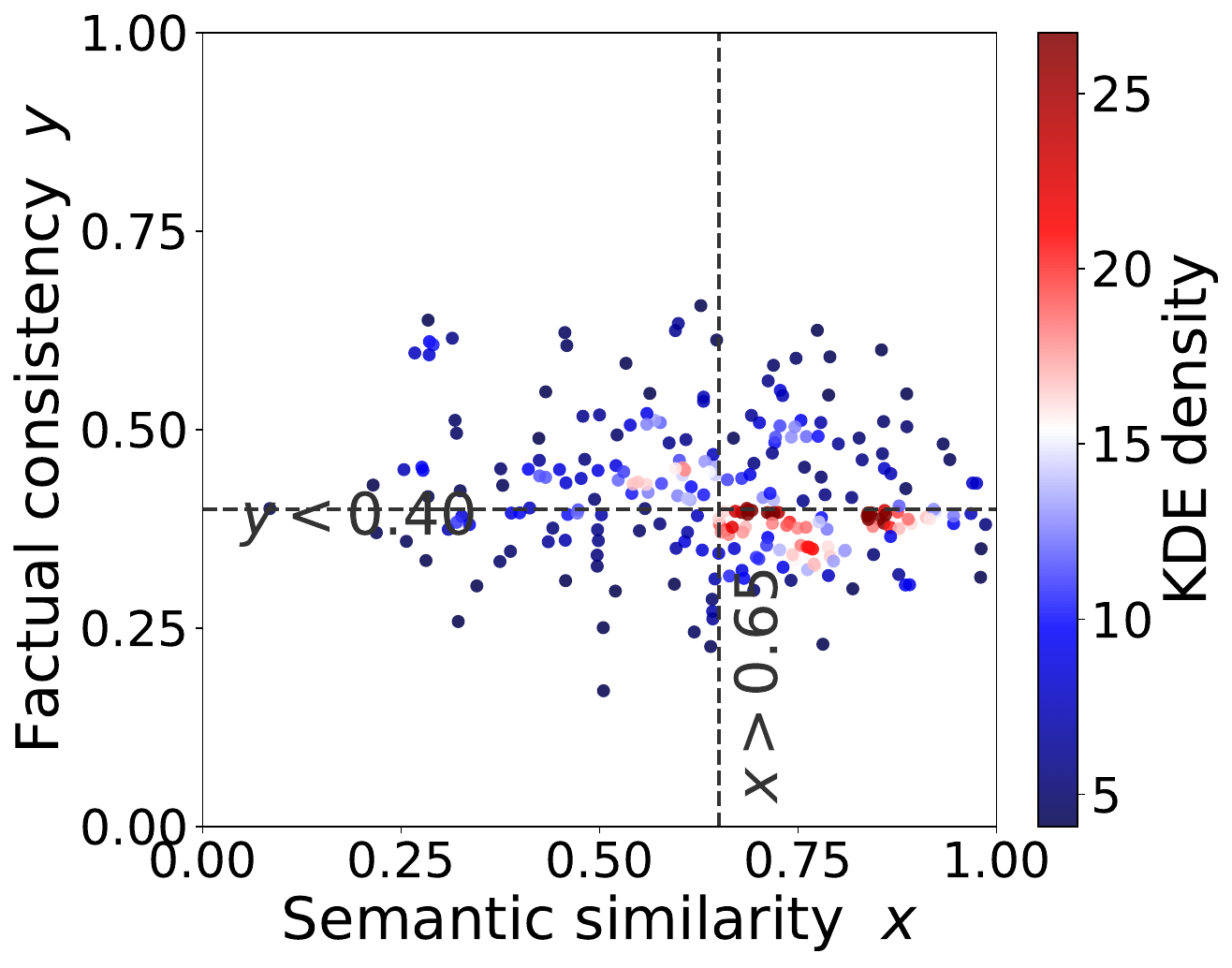}
  \caption{\textbf{Residual conflict-detection errors for TCR}.
           Dot colour follows a \emph{blue - white - red} scale
          : warmer hues denote denser error pockets. 
           Missed conflicts cluster just below the factual threshold
           ($y\!\approx\!0.35$); a thin false-positive band appears near
           $y\!\approx\!0.45$.  Dashed lines mark the current decision
           boundary $x{>}0.65,\;y{<}0.40$.}
  \label{app:rq1_fig}
\end{figure}

\begin{table*}[t]
\centering\small
\setlength{\tabcolsep}{3pt}
\begin{tabular}{c|cc|cc|cc|cc|cc|cc}
\hline
& \multicolumn{4}{c|}{\textbf{Llama-3-8B}} & \multicolumn{4}{c|}{\textbf{Llama-3-13B}} & \multicolumn{4}{c}{\textbf{Qwen3-8B}} \\
\cline{2-13}
\multirow{2}{*}{Method} & \multicolumn{2}{c|}{ConflictTQA} & \multicolumn{2}{c|}{ConflictPQA} & \multicolumn{2}{c|}{ConflictTQA} & \multicolumn{2}{c|}{ConflictPQA} & \multicolumn{2}{c|}{ConflictTQA} & \multicolumn{2}{c}{ConflictPQA} \\
\cline{2-13}
& KGRR$\uparrow$ & MCOR$\downarrow$ & KGRR & MCOR & KGRR & MCOR & KGRR & MCOR & KGRR & MCOR & KGRR & MCOR \\
\hline
Prompt            & 42.4 & 19.5 & 48.5 & 22.5 & 45.7 & 21.3 & 51.0 & 24.5 & 38.2 & 17.6 & 44.3 & 20.4 \\
KAFT              & 43.7 & 23.8 & 49.0 & 26.0 & 46.0 & 25.2 & 52.0 & 28.5 & 40.5 & 21.3 & 46.2 & 24.9 \\
IRCAN             & 68.2 & 25.5 & 73.5 & 28.0 & 71.2 & 27.2 & 75.0 & 30.5 & 63.4 & 22.8 & 69.6 & 26.3 \\
RAAT              & 52.6 & 37.9 & 57.0 & 41.5 & 55.8 & 39.8 & 59.5 & 43.8 & 47.9 & 32.5 & 54.1 & 36.8 \\
Parenting         & 66.2 & 53.8 & 70.0 & 57.0 & 69.1 & 55.3 & 72.5 & 58.5 & 61.5 & 47.6 & 67.2 & 51.3 \\
CD$^{2}$          & 51.3 & 52.7 & 55.5 & 56.0 & 54.3 & 54.6 & 58.5 & 57.5 & 48.8 & 45.9 & 53.2 & 49.8 \\
\hline
\textsc{TCR}      & \textbf{69.2}$^{\dagger}$ & \underline{61.9}$^{\dagger}$ & \textbf{74.5}$^{\dagger}$ & \underline{65.5}$^{\dagger}$ & \textbf{73.5}$^{\dagger}$ & \underline{63.6}$^{\dagger}$ & \textbf{77.5}$^{\dagger}$ & \underline{67.5}$^{\dagger}$ & \textbf{66.3}$^{\dagger}$ & \underline{54.2}$^{\dagger}$ & \textbf{71.7}$^{\dagger}$ & \underline{58.8}$^{\dagger}$ \\
\hline
TCR+CD$^{2}$      & 68.5 & \textbf{63.2}$^{\dagger}$ & 73.8 & \textbf{66.8}$^{\dagger}$ & 72.8 & \textbf{64.9}$^{\dagger}$ & 76.0 & \textbf{68.8}$^{\dagger}$ & 65.6 & \textbf{55.5}$^{\dagger}$ & 70.1 & \textbf{59.1}$^{\dagger}$ \\
TCR+IRCAN         & 67.9 & 56.3 & 72.6 & 60.0 & 71.6 & 59.9 & 75.8 & 63.0 & 64.2 & 49.6 & 69.9 & 54.2 \\
TCR+RAAT          & \underline{68.8} & 62.6 & \underline{74.2} & 66.2 & \underline{73.2} & 64.2 & \underline{77.3} & 68.1 & \underline{65.9} & 54.8 & \underline{70.4} & 58.4 \\
\hline
\end{tabular}
\vspace{-1mm}
\caption{\textbf{End-to-end performance on ConflictTQA/PQA}.  
\textbf{Bold}=best; \underline{underline}=second best.  
$^{\dagger}$ = significant.}
\label{tab:performance}
\end{table*}

\begin{table}[t]
\centering
\small
\setlength{\tabcolsep}{3pt}
\begin{tabular}{l|ccc}
\toprule
\textbf{Method} & \textbf{Llama-3-8B} & \textbf{Llama-3-13B} & \textbf{Qwen3-8B} \\
\midrule
Astute RAG     & 51.2 & 56.6 & 49.7 \\
InstructRAG    & 47.8 & 51.2 & 50.3 \\
Self-Route     & 46.5 & 49.2 & 48.5 \\
\midrule
\textbf{TCR (ours)} & \textbf{54.8} & \textbf{56.8} & \textbf{53.7} \\
\bottomrule
\end{tabular}
\caption{\textbf{Overall accuracy (\%) on Natural Questions with a single
Google top-1 page as context}.}
\label{tab:real_world_validation}
\end{table}

\vspace{1mm}
\noindent\textbf{RQ2 – End-to-End Factuality}
\label{sec:rq2}
TCR ranks best in 22/24 cells of Table~\ref{tab:performance}, improving
KGRR by ${+}21.4$ and reducing MCOR by ${-}29.3$ vs.\ prompting.
TCR+CD$^{2}$ achieves the lowest MCOR, confirming complementarity between
our conflict score and contrastive decoding. On Natural Questions
(Table~\ref{tab:real_world_validation}), TCR outperforms specialised
pipelines without fine-tuning (+3.6\,pt over \textsc{Astute RAG}, +6.1\,pt
over \textsc{Self-Route}), with the largest gain on Qwen3-8B (+4.0\,pt).


\vspace{1mm}
\noindent\textbf{RQ3 – Robustness and Transfer}
\label{sec:rq3}
Table~\ref{tab:detect_robust} shows that TCR loses only 7.2 EM points under
30 \% distractor injection—2.3 points less than RAAT—and attains the best
cross-domain F1.  Hybrid TCR+RAAT preserves RAAT’s adversarial defence
while inheriting our conflict-aware gains, giving the overall strongest
robustness.

\vspace{1mm}
\noindent\textbf{RQ4 – Efficiency}
\label{sec:rq4}
Table~\ref{tab:eff} confirms that TCR adds only 0.3 \% parameters and
0.3 GB VRAM, retaining 94 \% of vanilla decoding speed.  The method is
thus \emph{practically free} to deploy relative to heavyweight
fine-tuning baselines.

\begin{table}[t]
\centering\footnotesize
\setlength{\tabcolsep}{4pt}
\begin{tabular}{lccc}
\toprule
\textbf{Method} & Extra Params & VRAM & Tok/s \\
& (M, \%) & (GB) & $\uparrow$ \\
\midrule
Prompt          & 0  (0)   & 17.8 & \textbf{28.3} \\
RAAT            & 100 (1.2)& 18.9 & 24.6 \\
IRCAN           & 52  (0.6)& 18.4 & 25.1 \\
Parenting       & 68  (0.8)& 18.6 & 24.9 \\
\textsc{TCR}    & 21  (0.3)& 18.1 & 26.7 \\
TCR+CD$^{2}$    & +0       & 18.1 & 22.5 \\
\bottomrule
\end{tabular}
\caption{\textbf{Computational footprint on A100-80 GB}.}
\label{tab:eff}
\end{table}

\begin{table}[t]
\centering\footnotesize
\setlength{\tabcolsep}{4pt}
\begin{tabular}{lcc}
\toprule
\textbf{Model} & $\rho$(score, human)$\uparrow$ & $\kappa$$\uparrow$ \\
\midrule
Llama-3-8B   & 0.71 & 0.67 \\
Llama-3-13B  & 0.74 & 0.69 \\
Qwen3-8B     & 0.63 & 0.61 \\
\textbf{Avg.}& \textbf{0.69} & \textbf{0.66} \\
\bottomrule
\end{tabular}
\caption{\textbf{Alignment of conflict scores with human judgements on 300
annotated queries}.}
\label{tab:interp}
\end{table}

\vspace{1mm}
\noindent\textbf{RQ5 – Interpretability}
\label{sec:rq5}
Fig.~\ref{fig:interpretation} visualises how successfully corrected
(SC) and defended (SD) cases cluster in the high-semantic/low-factual
quadrant, whereas failure cases gravitate toward boundary regions.  The
scalar conflict score correlates strongly with human labels
($\rho=0.69$) and exhibits substantial inter-annotator agreement
($\kappa=0.66$) in Table~\ref{tab:interp}, demonstrating that our
signals are human-intelligible.  The violin plots reveal backbone-specific
confidence calibration quirks, offering a diagnostic tool for future
alignment work.

\begin{figure}[H]
  \centering
  \includegraphics[width=0.95\linewidth]{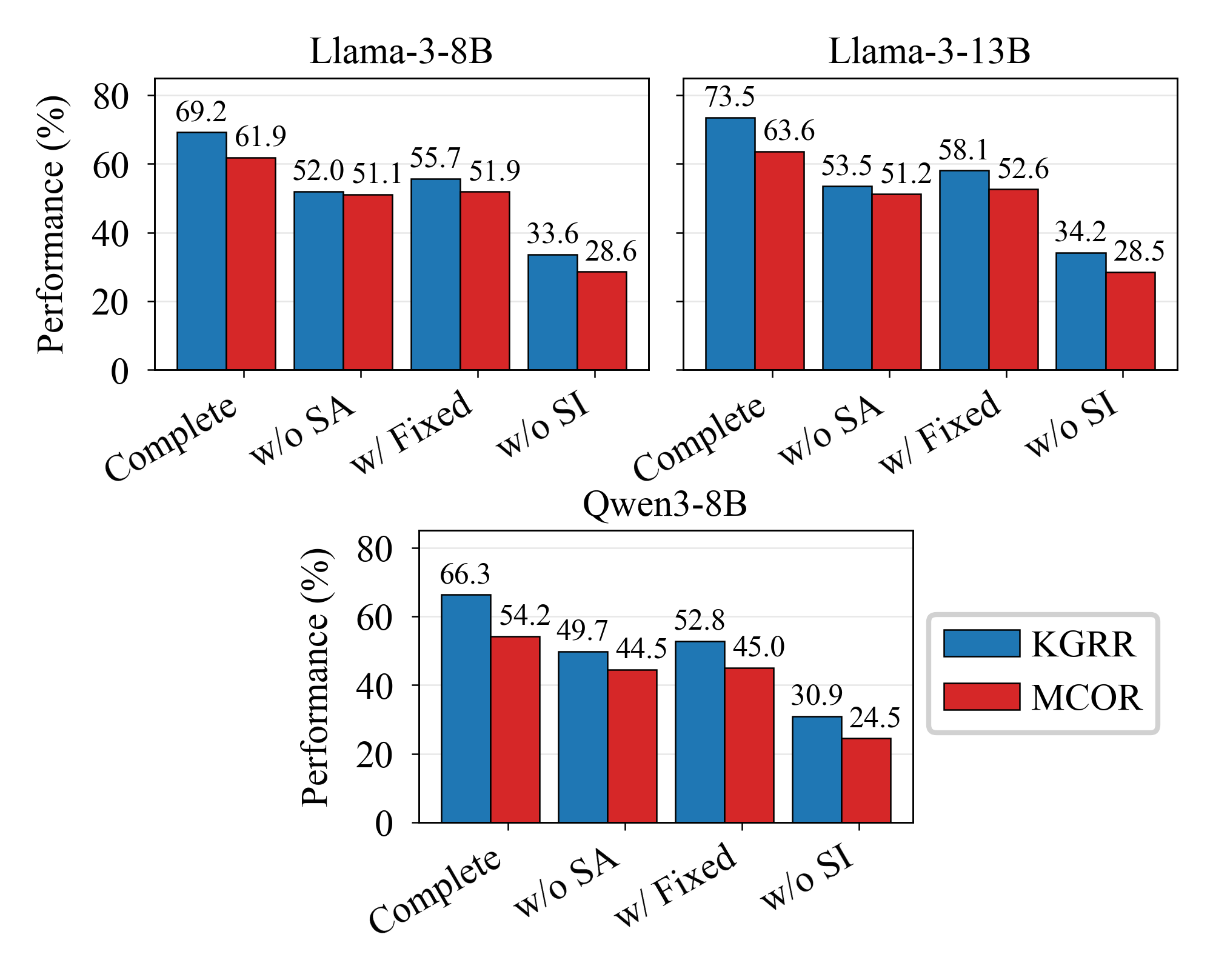}
  \vspace{-1mm}
  \caption{\textbf{Ablation study results across different models}.}
\label{fig:ablation}
\end{figure}

\vspace{1mm}
\noindent\textbf{Ablation Study}
We ablate three modules (Fig.~\ref{fig:ablation}): removing
\emph{self-answerability}(SA) drops MCOR/KGRR by $\sim$18/26\,pp; replacing
\emph{dynamic weighting} with a fixed scalar(Fixed) degrades 17/20\,pp and harms
reasoning; removing \emph{signal integration}(SI) collapses 55/53\,pp.
Overall, TCR needs self-answerability, adaptive weighting, and integrated
fusion.

\subsection{Interpretability Analysis}

\noindent\textbf{What does the model truly ``see" when conflicts arise?}
Fig.~\ref{fig:interpretation} situates each test pair in the \emph{semantic-similarity} ($x$) vs.\ \emph{factual-consistency} ($y$) plane and overlays self-answerability as violin plots, turning raw activations into an interpretable decision space. We observe: \emph{(i) Spatial pattern.} Nearly all \textit{successfully corrected} cases—closing knowledge gaps (SC) or resisting misinformation (SD)— cluster in the \emph{high-$x$/low-$y$} region, while failures lie closer to the axes, indicating `topic match but fact clash'' is where re-evaluation is triggered. \emph{(ii) Self-answerability.} SC/SD exhibit higher answerability than still-wrong (SW) or misled (ML) cases, suggesting this scalar controls when to trust memory vs.\ context. \emph{(iii) Backbone variation.} Llama shows tighter, better-separated distributions than Qwen (likely from English-centric tuning), yet the same qualitative structure holds, confirming cross-model robustness of the signals.

\begin{figure}[H]
  \centering
  \includegraphics[width=1\linewidth]{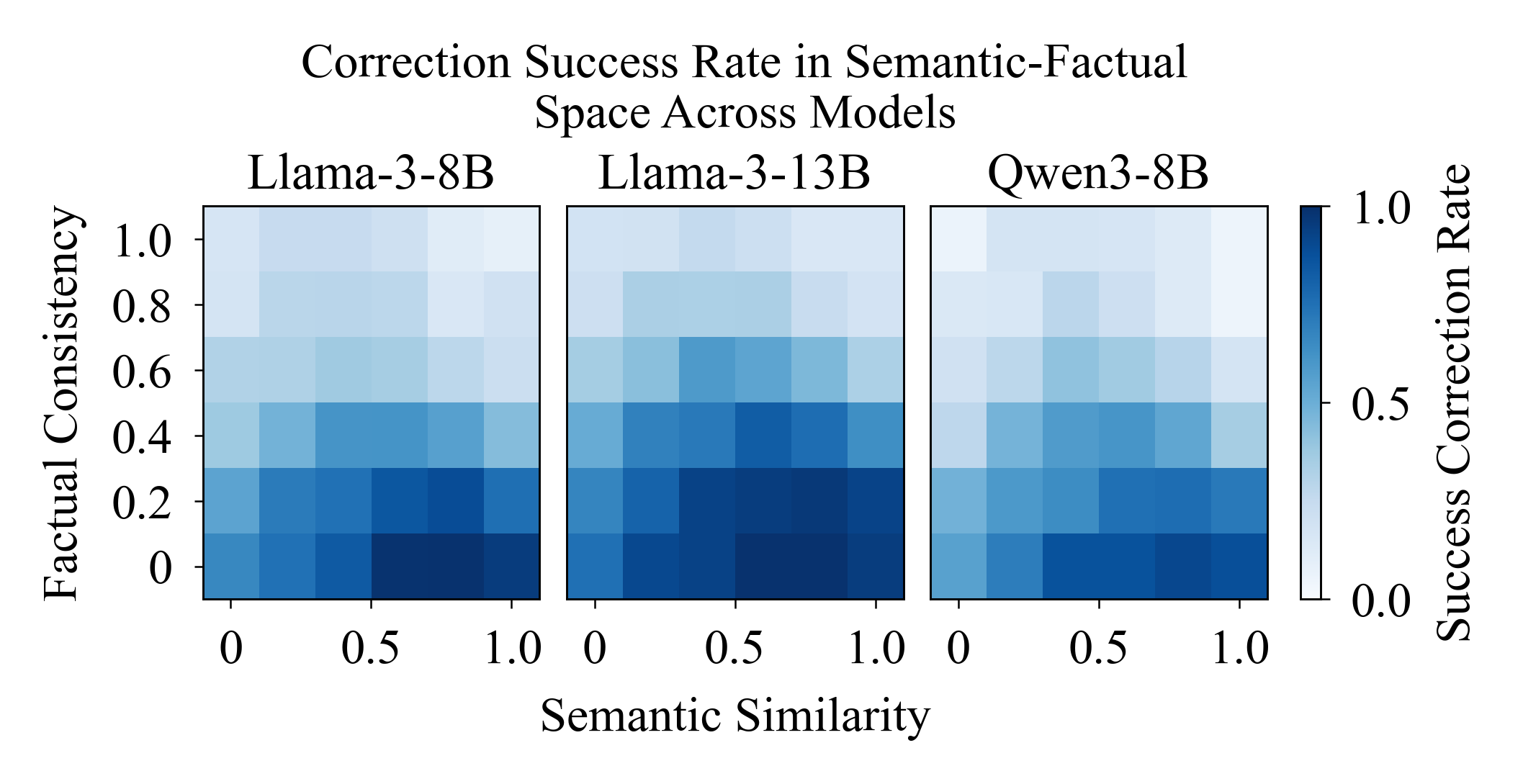}
  \includegraphics[width=1\linewidth]{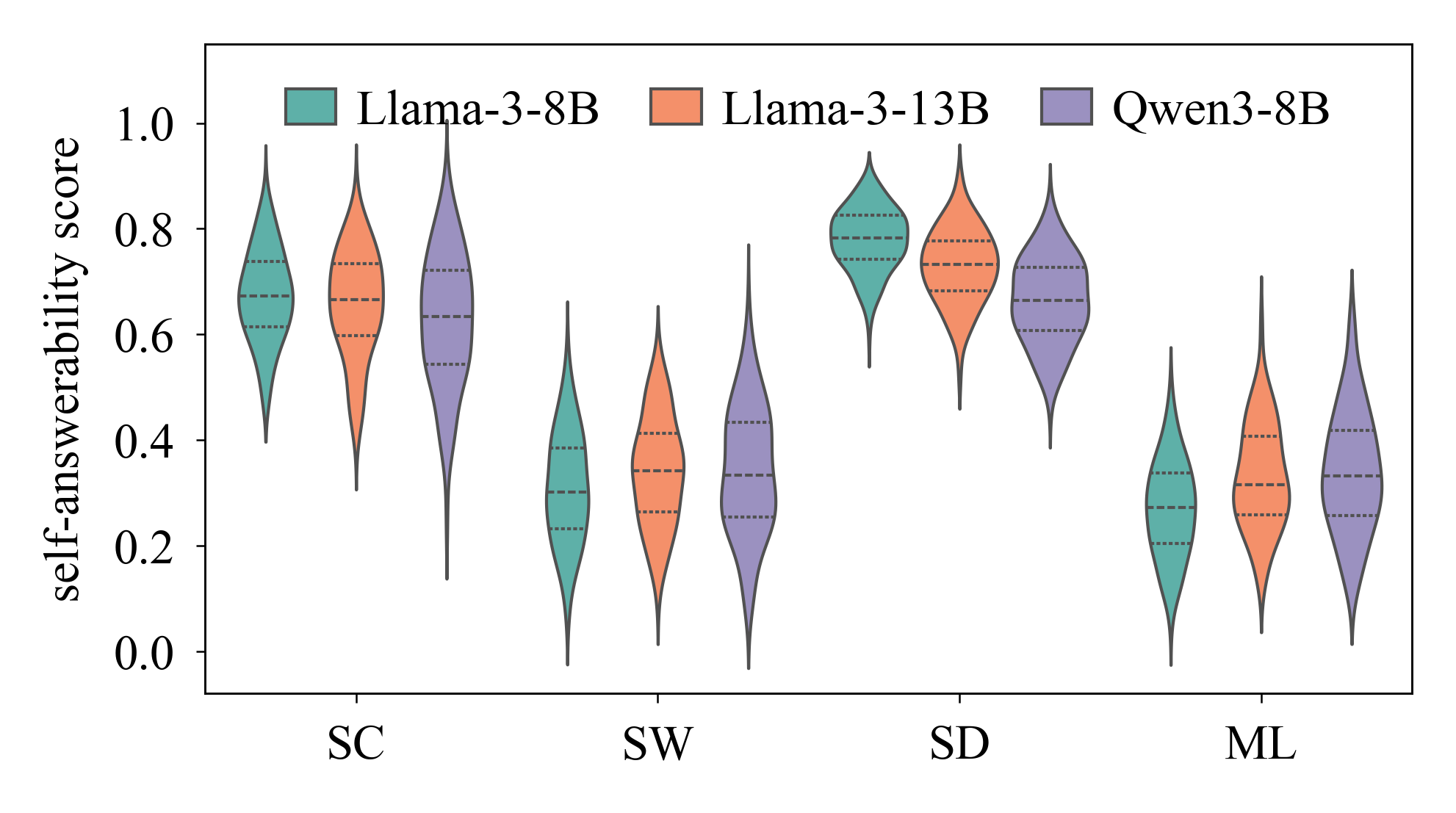}
  \caption{Top: Heat maps of correction success rates in semantic-factual space, showing higher performance in high semantic similarity/low factual consistency regions. Bottom: Violin plots of self-answerability scores for case types: SC (corrected wrong internal knowledge), SW (failed correction), SD (maintained correct knowledge), ML (overrode correct knowledge).}
  \vspace{-3mm}
\label{fig:interpretation}
\end{figure}

\vspace{1mm}
\noindent\textbf{When does the model trust external evidence?}
We bucket self-answerability \(s\) into \textsc{Low} \([0,0.3)\),
\textsc{Mid} \([0.3,0.7)\) and \textsc{High} \([0.7,1]\) and measure the
rate at which the model overrides its parametric answer with retrieved
context (\emph{flip rate}).  Fig.~\ref{fig:flip} shows a sharp phase
transition: for \(s<0.3\) the flip rate is \(\sim\!42\%\), whereas for \(s>0.7\) it drops to \(4\%\).  The
threshold is identical across Llama-3-8B, Llama-3-13B and Qwen3-8B,
offering a simple, transparent rule: “if self-answerability $>$ 0.7, trust
memory; otherwise examine context.”  

\begin{figure}[H]
  \centering
  \includegraphics[width=0.9\linewidth]{./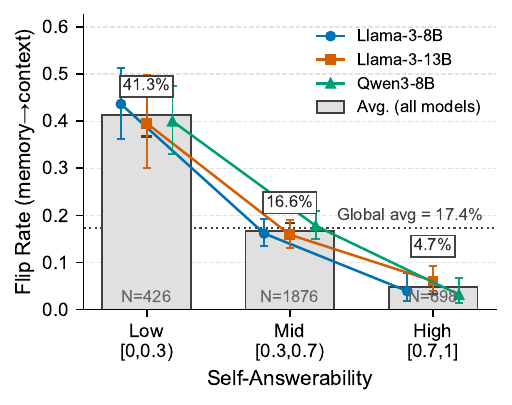}
  \caption{\textbf{Flip rate for self-answerability bins}.}
  \label{fig:flip}
\end{figure}

\vspace{1mm}
\noindent\textbf{Signal dynamics during decoding.}
Fig.~\ref{fig:signal_dyn} shows signal trajectories over the first 20 decoding steps for \textsc{Fixed} (corrected) and \textsc{Misled} cases. In \textsc{Fixed}, factual consistency rises early and surpasses self-answerability by step~7—when the correct answer emerges—while semantic similarity remains high. In contrast, \textsc{Misled} shows stagnant factual scores and a delayed self-answerability spike, aligning with wrong outputs. This pattern suggests a useful heuristic: factual $>$ self-ans before step~10 predicts success in 70\% of cases, enabling early-stop or re-prompt strategies.

\begin{figure}[H]
  \centering
  \includegraphics[width=.95\linewidth]{./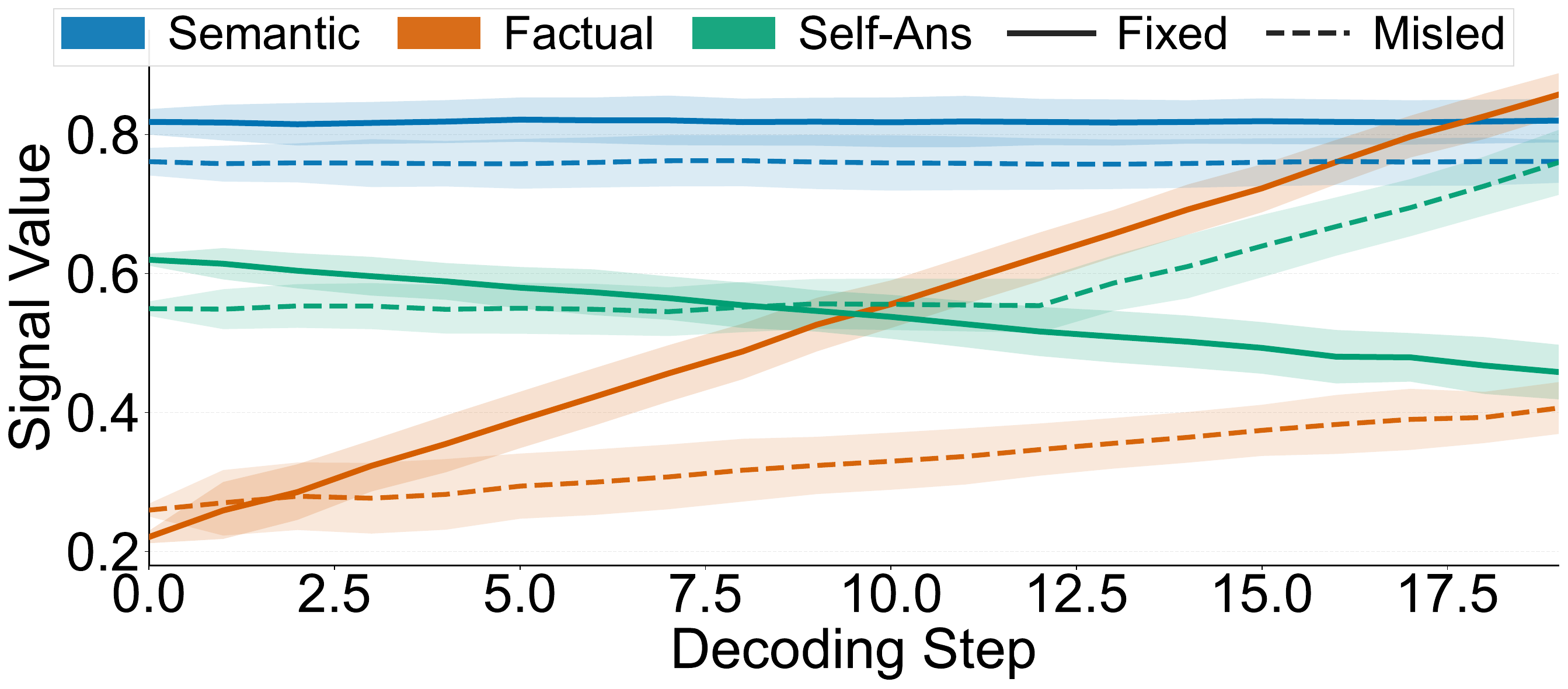}
  \caption{\textbf{Average signal trajectories }.}
  \label{fig:signal_dyn}
\end{figure}


\section{Conclusion}
We propose \textsc{TCR}, a lightweight conflict-aware RAG module that
decouples semantic relevance from factual consistency and uses
self-answerability to steer generation with three interpretable scalars.
It consistently improves detection, factuality, robustness, and
interpretability on synthetic and real-world benchmarks, and we plan to
extend it to multimodal and federated settings~\citep{li2024variational,zhang2024cf,zhang2023multi}.

\appendix

\bibliography{aaai2026}

\end{document}